\documentclass[runningheads]{llncs}
\usepackage[T1]{fontenc}
\usepackage{graphicx}
\usepackage{amssymb}
\usepackage{amsmath}
\usepackage{amssymb}   
\usepackage{pgfplots}
\pgfplotsset{compat=1.18}
\usepackage{booktabs}
\usepackage{array}
\usepackage{algorithm}
\usepackage{algpseudocode}
\newcommand{\ptp}{\pi_{\mathrm{PTP}}}
\usepackage{xcolor}
\newcommand{\ML}[1]{\textcolor{red}{ML: #1}}
\renewcommand{\AA}[1]{\textcolor{blue}{AA: #1}}

\newcommand{\hpi}{\pi_{\mathrm{h}}}
\usepackage[nobreak]{mdframed}
\usepackage{graphicx}
\usepackage{subcaption}
\usepackage{subcaption} 
\usepackage[most]{tcolorbox} 
\definecolor{theorembar}{RGB}{33,49,74}   
\definecolor{theorembody}{RGB}{236,240,246} 
\definecolor{theoremborder}{RGB}{33,49,74}
\usepackage{pgfplots}
\pgfplotsset{compat=1.18} 

\usetikzlibrary{spy}
\newtcbtheorem[no counter]{proofblock}{Proof}{
    enhanced,
    sharp corners,                          
    boxrule=0.5pt,                          
    colback=blue!5!white,                   
    colframe=blue!75!black,                 
    fonttitle=\bfseries,                    
    coltitle=white,                         
    attach boxed title to top left={        
        yshifttext=-1mm, 
        xshift=2mm 
    },
    boxed title style={
        sharp corners, 
        size=small, 
        colback=blue!75!black               
    }
}{pb}

\definecolor{theorembar}{RGB}{33,49,74}    
\definecolor{theorembody}{RGB}{236,240,246} 
\definecolor{theoremborder}{RGB}{33,49,74}  

\newtcbtheorem[number within=section]{mytheorem}{Theorem}{
    enhanced,
    sharp corners,
    boxrule=0.5pt,
    colback=theorembody,             
    colframe=theoremborder,         
    fonttitle=\bfseries\sffamily,   
    coltitle=white,                 
    attach boxed title to top left={yshifttext=-1mm, xshift=2mm},
    boxed title style={
        sharp corners, 
        colback=theorembar          
    },
    separator sign={:},
}{th}

\newtcbtheorem{myproposition}{Proposition}{
    enhanced,
    sharp corners,
    boxrule=0.5pt,
    colback=theorembody,
    colframe=theoremborder,
    fonttitle=\bfseries\sffamily,
    coltitle=white,
    attach boxed title to top left={yshifttext=-1mm, xshift=2mm},
    boxed title style={
        sharp corners, 
        colback=theorembar
    },
    separator sign={:},
}{prop}

\newtcbtheorem{myobservation}{Observation}{
    enhanced,
    sharp corners,
    boxrule=0.5pt,
    colback=theorembody,
    colframe=theoremborder,
    fonttitle=\bfseries\sffamily,
    coltitle=white,
    attach boxed title to top left={yshifttext=-1mm, xshift=2mm},
    boxed title style={
        sharp corners, 
        colback=theorembar
    },
    separator sign={:},
}{observ}

\definecolor{algobar}{RGB}{0, 104, 132}
\definecolor{algobody}{RGB}{235, 247, 247}
\newtcbtheorem{myalgorithm}{Algorithm}{
    enhanced,
    sharp corners,
    boxrule=0.5pt,
    colback=algobody,             
    colframe=algobar,              
    fonttitle=\bfseries\sffamily,
    coltitle=white,
    attach boxed title to top left={yshifttext=-1mm, xshift=2mm},
    boxed title style={
        sharp corners, 
        colback=algobar            
    },
    separator sign={:},
}{algor}

\begin{document}
%
\title{Active Reward Machine Inference From Raw State Trajectories}
\titlerunning{Active Reward Machine Inference}
%
\author{Mohamad Louai Shehab\inst{1}\orcidID{0009-0009-7179-3881} \and Antoine Aspeel\inst{2}\orcidID{0000-0003-3011-7122} \and
Necmiye Ozay\inst{1,3}\orcidID{0000-0002-5552-4392}}
\authorrunning{M.L. Shehab et al.}
%
\institute{Robotics Department, University of Michigan, Ann Arbor, MI \and Independent Researcher \and Electrical Engineering and Computer Science Department, University of Michigan, Ann Arbor, MI}

\maketitle        

\begin{abstract}
Reward machines are automaton-like structures that capture the memory required to accomplish a multi-stage task. When combined with reinforcement learning or optimal control methods, they can be used to synthesize robot policies to achieve such tasks. However, specifying a reward machine by hand, including a labeling function capturing high-level features that the decisions are based on, can be a daunting task. This paper deals with the problem of learning reward machines directly from raw state and policy information. As opposed to existing works, we assume no access to observations of rewards, labels, or machine nodes, and show what trajectory data is sufficient for learning the reward machine in this information-scarce regime. We then extend the result to an active learning setting where we incrementally query trajectory extensions to improve data (and indirectly computational) efficiency. Results are demonstrated with several grid world examples.

\keywords{Inverse Reinforcement Learning \and Reward Machines \and Multi-stage tasks.}
\end{abstract}

\setcounter{footnote}{0}
\section{Introduction}

Multi-stage tasks are ubiquitous in robotics applications, as real-world objectives are rarely achieved through a single atomic action but instead require the coordinated execution of sequential and interdependent subtasks \cite{Sutton1998Options,Barto2003HRL,KressGazit2018TL}. These robots have to operate under temporally extended objectives in which task success depends on satisfying intermediate goals in a specific order, motivating the use of hierarchical and modular task representations \cite{Andreas2017Modular}. Reward machines provide a principled and expressive formalism for representing such multi-stage task structure by encoding task progress as a finite-state automaton whose transitions are triggered by high-level events or propositions observed during execution \cite{ToroIcarte2018RM,ToroIcarte2019LMRL,camacho2021reward,defazio2024learning,baert2025reward}. 

While Reward Machines (RMs) offer a powerful framework for structured task execution, their utility is often bottlenecked by the requirement for human experts to manually specify the underlying automaton. In complex, real-world environments, defining the exact logical transitions and propositional triggers for a task is not only labor-intensive but also prone to specification errors that can lead to unintended robotic behaviors \cite{Amodei2016Safety}. Consequently, there is a growing imperative to develop algorithms capable of learning RMs directly from experience \cite{ToroIcarte2019LMRL,Xu2020DeepRML,shehab2025learning}. This automated RM inference allows a robot to autonomously discover the latent logical structure of a task, identifying the ``hidden'' stages that define successful execution without explicit human oversight. 

Existing literature on learning reward machines generally falls into three categories: those assuming ground-truth labels for states or rewards to find consistent automata \cite{araki2019learning,xu2020joint,icarte2023learning,hu2024reinforcement,abate2023learning}, those utilizing active learning or $L^*$ oracles to query membership and conjectures \cite{angluin1987learning,xu2021active,memarian2020active}, and those combining automata synthesis with reinforcement learning in interactive environments \cite{hasanbeig2024symbolic,hasanbeig2021deepsynth,furelos2020induction}. While some approaches rely on observing demonstrations \cite{camacho2021reward,baert2024reward}, they are often restricted to single-stage tasks where the RM serves primarily for reward shaping rather than complex temporal logic. Other works learn reward machines for multi-staged tasks purely from demonstrations \cite{shehab2025learning}.  A critical bottleneck in these works is the reliance on a predefined labeling function that maps low-level states to high-level propositions. Recent efforts have begun addressing labeling function ambiguity by accounting for noise and uncertainty in label assignments \cite{li2024reward,parac2024learning,verginis2024joint}; however, they still assume the existence of a noisy labeling function. In contrast, our framework is the first to learn both the labeling function and the reward machine from scratch using only state-based traces, eliminating the need for any prior symbolic knowledge or predefined event detectors.\\

\textbf{Notation:} For a finite set $X$, we denote by $|X|$ its cardinality. The set of all probability distributions over $X$ is denoted by $\Delta(X)$. The set of all finite sequences with elements in $X$ is denoted by $X^*$. For two sets $X$ and $Y$, their Cartesian product is $X \times Y$, and the Cartesian power of $X$ of order $n$ is denoted by $X^n$. The set of real numbers is denoted by $\mathbb{R}$. Logical conjunction and disjunction are denoted by $\wedge$ and $\vee$, respectively. The expectation operator is denoted by $\mathbb{E}$.

\section{Preliminaries}

\subsection{Markov decision processes and reward machines}

A \emph{Markov Decision Process} (MDP) is a tuple
$$
\mathcal{M} = (\mathcal{S},\mathcal{A}, \mathcal{P},\mu_0, \gamma, r),
$$
where $\mathcal{S}$ is the finite set of states, $\mathcal{A}$ is the finite set of actions, $\mathcal{P}:\mathcal{S}\times \mathcal{A} \to \Delta(\mathcal{S})$ is the Markovian transition kernel, $\mu_0 \in \Delta(\mathcal{S})$ is the initial state distribution, $\gamma \in [0,1)$ is the discount factor, and $r:\mathcal{S}\times \mathcal{A}\times \mathcal{S}\to \mathbb{R}$ is the reward function. We refer to a MDP without the reward $r$ as an \emph{MDP model}, and denote it by $\mathcal{M}\setminus r$.

A \emph{Reward Machine} (RM) is a tuple
$$
\mathcal{R}=(\mathcal{U}, u_I, \mathrm{AP}, \delta_{\mathbf{u}}, \delta_{\mathbf{r}}),
$$
where $\mathcal{U}$ is the finite set of nodes, $u_I\in\mathcal{U}$ is the initial node, $\mathrm{AP}$ is the set of atomic propositions (also called input alphabet), $\delta_{\mathbf{u}}: \mathcal{U} \times \mathrm{AP}\to \mathcal{U}$ is the (deterministic) transition function, and $\delta_{\mathbf{r}}: \mathcal{U}\times \mathrm{AP} \to \mathbb{R}$ is the output function. A reward machine without its output function is named a \emph{reward machine model}. We extend the definition of the transition function to define $\delta_{\mathbf{u}}^*: \mathcal{U} \times (\mathrm{AP})^* \to \mathcal{U}$ as $\delta_{\mathbf{u}}^*(u, l_0 , \cdots, l_k) = \delta_{\mathbf{u}}(\cdots(\delta_{\mathbf{u}}(\delta_{\mathbf{u}}(u, l_0),l_1), \cdots, l_k)$. 

Finally, a \emph{labeling function} (compatible with an MDP $\mathcal{M}$ and a RM $\mathcal{R}$) is a function $L:\mathcal{S}\to\mathrm{AP}$ which associates an atomic proposition of a RM to each state of an MDP.

It is common to introduce the notion of labeled MDP as a pair formed by an MDP and a compatible labeling function. However, in the problem we are interested in, both the RM and the labeling function are unknown. Consequently, it will make our notation simpler to consider a labeled RM instead. Formally, a \emph{labeled RM} refers to a pair composed by a reward machine and a (compatible) labeling function $\mathcal{R}_L=(\mathcal{R},\mathcal{S},L)$. A \emph{labeled RM model} is a labeled RM without its output function $\delta_\mathbf{r}$ and is denoted by $\mathcal{G}$. In the next section, we show how a compatible labeling function allows to ``connect'' an MDP model with a RM.

\subsection{Product MDP} \label{sec:productMDP}
An MDP model $\mathcal{M}\setminus r$ together with a reward machine $\mathcal{R}$ and a compatible labeling function $L$  allow to define a \emph{product MDP}
$$
\mathcal{M}_\mathrm{Prod} = (\mathcal{S}', \mathcal{A}', \mathcal{P}', \mu_0^\prime, \gamma^\prime, r'),
$$
where $\mathcal{S}' = \mathcal{S}\times \mathcal{U}$, $\mathcal{A}' = \mathcal{A}$, $\mathcal{P}'(s',u'| s,u,a) = \mathcal{P}(s'|s,a) \mathbf{1}(u' = \delta_\textbf{u}(u,L(s')))$, $\gamma^\prime = \gamma$, $\mu_0^\prime \in \Delta(\mathcal{S}\times \mathcal{U})$ with $\mu_0^\prime(s,u) = \mu_0(s)\mathbf{1}(u = u_I)$ and $r'(s,u,a,s',u') = \delta_\mathbf{r}(u,L(s^\prime))$, where $\mathbf{1}(p)$ is one if $p$ is true and zero otherwise. To make the notation compact, we denote the product state by $\bar s = (s,u)$.

A \emph{trajectory} of the product MDP $\mathcal{M}_\mathrm{Prod}$ is a sequence $(\bar s_{\emptyset}, a_{\emptyset}, \bar s_0, a_0, \bar s_1, a_1,\cdots)$, where $\bar s_{\emptyset} = (\emptyset, u_I)$ and $a_{\emptyset}= \emptyset$. An initial state $s_0$ is sampled from $\mu_0$. The introduction of $\bar s_{\emptyset}$ and $a_\emptyset$ at the start of the trajectory is to ensure that $s_0$ induces a transition in the reward machine. The reward machine thus transitions to $u_0 = \delta_\textbf{u}(u_I, L(s_0))$. The agent then takes action $a_0$ and transitions to $s_1$. Similarly, the reward machine transitions to $u_1 = \delta_\textbf{u}(u_0, L(s_1))$. The same procedure continues infinitely. We consider the product policy $\pi_{\mathrm{Prod}}:\mathrm{Dom_{Prod}} \to \Delta(\mathcal{A})$ where $\mathrm{Dom_{Prod}}\subseteq \mathcal{S}\times\mathcal{U}$ is the set of accessible $(s,u)$ pairs in the product MDP. This policy is a function that describes an agent’s behavior by specifying an action distribution at each state.  We consider the Maximum Entropy Reinforcement Learning (MaxEntRL) objective given by:
\begin{equation}\label{eq:max_ent_obj}
    J_{\mathrm{MaxEnt}}(\pi;r') = \mathbb{E}^{\pi}_{\mu_0}[\sum\limits_{t=0}^{\infty} \gamma^t \biggl(  r^\prime (\bar s_t,a_t, \bar s_{t+1}) + \lambda \mathcal{H}(\pi(.|\bar s_t)) \biggr)],
 \end{equation}
where $\lambda > 0$ is a regularization parameter, and $\mathcal{H}(\pi(.|\bar{s})) = -\sum\limits_{a\in \mathcal{A}} \pi(a|\bar{s})\log(\pi(a|\bar{s}))$ is the entropy of the policy $\pi$. The expectation is with respect to the probability distribution $\mathbb{P}^\pi_{\mu_0}$, the induced distribution over infinite trajectories following $\pi$, $\mu_0$, and the Markovian transition kernel $\mathcal{P}^\prime$ \cite{ziebart2008maximum}. The optimal policy $\pi_{\mathrm{Prod}}^*$, corresponding to a reward function $r'$, is the maximizer of (\ref{eq:max_ent_obj}), i.e.,
\begin{equation}\label{eq:opt_prob}
    \pi_{\mathrm{Prod}}^* = \arg \max\limits_{\pi} J_{\mathrm{MaxEnt}}(\pi;r').
\end{equation}
As shown in \cite{ziebart2008maximum}, the maximizer is unique and this policy is well defined. 

\section{Problem Statement}

We investigate the problem of learning a reward machine that makes a policy optimal, without assuming that the rewards, the machine nodes, or the atomic propositions are observed. This problem is ill-posed since several reward machines could make a policy optimal. To write this problem formally, we first introduce the notion of history policy (which captures the information available for solving this learning problem), and then the notion of policy equivalence (which characterizes equivalent solutions of this learning problem).

\subsection{History Policy and Reward Machine Equivalence}
Consider an MDP model, a RM, and a compatible labeling function. As shown in Section~\ref{sec:productMDP}, this allows to define the product MDP and the corresponding optimal product policy. The \emph{history policy}, denoted $\hpi: \mathrm{Dom_h} \to \Delta(\mathcal{A})$ is defined as
\begin{align} \label{eq:induced}
\hpi(a| s,\tau) = \pi_{\mathrm{Prod}}(a| s,\delta_u^\star(u_I,L(\tau))).
\end{align}
The domain $\mathrm{Dom_h}\subseteq\mathcal{S} \times \mathcal{S}^*$ is the largest set for which the right-hand side of \eqref{eq:induced} is well defined. In \eqref{eq:induced}, $\tau \in \mathcal{S}^*$ is a trajectory of states leading up to the current state $s\in\mathcal{S}$. While $\pi_{\mathrm{Prod}}$, $\delta_u$, and $L$ are not known, the history policy $\hpi$ is assumed to be available\footnote{While we make this assumption for simplicity, in practice, instead of $\hpi$, one has access to state-action trajectories of an agent implementing $\hpi$. Then, $\hpi$ can be estimated by a sample average. A discussion on the effects of such estimation can be found in \cite{shehab2025learning}.}. In that sense, the history policy serves as a state-only representation of the product policy, capturing the agent's behavior through the sequence of observable MDP states. We say that the product policy $\pi_{\mathrm{Prod}}$ \emph{induces} $\hpi$.

The history policy can take an arbitrarily long trajectory $\tau$ as argument. In contrast, we define the \emph{depth-$l$ restriction} of the history policy, denoted as $\hpi^{l}$, by restricting its domain to trajectories of length at most $l$, i.e., the domain of $\hpi^{l}$ is $\mathrm{Dom_h}\cap \left(\mathcal{S} \times\cup_{j=1}^{l} \mathcal{S}^j\right)$. 

As mentioned before, the inverse reinforcement learning problem we are interested in can have multiple solutions. This is captured by the following definition.

\begin{definition}
    Two labeled reward machines are \textbf{policy-equivalent} with respect to an MDP model if the optimal product policies for each of the labeled reward machines induce the same history policy. Among all the labeled reward machines that are policy equivalent with respect to an MDP model, we define a \textbf{minimal} reward machine as one with the fewest number of nodes.
\end{definition}

\subsection{Formal problem statement}\label{sec:problem}
We now have all the ingredients to formalize the problems we are interested in. For an MDP model and labeled RM, consider the induced optimal history policy. Knowing the MDP model and (a depth-$l^*$ restriction of) the history policy, is it possible to recover a labeled RM that is policy equivalent to the true one? This research question can be divided in the following:
\begin{enumerate}
\item[(P1)] Does there always exist a depth $l^*$ such that, given the MDP model $\mathcal{M}$, an upper bound $u_{\mathrm{max}}$ on the number of nodes of the underlying reward machine and the depth-$l^*$ restriction $\hpi^l$ of the true history policy, it is possible to learn a labeled reward machine that is policy-equivalent to the underlying one?
\item[(P2)] If $l^*$ in problem (P1) exists, find a minimal labeled reward machine that is policy-equivalent to the underlying one.
\end{enumerate}

\section{Methodology}
The problem of learning a reward machine (RM) directly from policies when the labels are known has previously been addressed in~\cite{shehab2024learning} with a two step process: 1) a Boolean Satisfiability (SAT) problem \cite{cook1971complexity,biere2009handbook} to learn a RM model; 2) a structured IRL problem that uses the product of the labeled RM model from step 1 and the MDP to recover the reward function. We present in this section how to extend this framework, and in particular step 1, to the more general and challenging setting in which the labeling function is unknown and must be learned jointly with the reward machine model.

\subsection{Learning a Labeled Reward Machine}

The unknown quantities that we aim to learn are the RM transition function $\delta_\mathbf{u}$, the labeling function $L$, and the output function $\delta_\mathbf{r}$. Learning $\delta_\mathbf{u}$, and $L$ corresponds to the generalization of step 1 of the process described above. Learning an output function, i.e., a reward function for the product, consistent with a policy corresponds to step 2 and has been addressed in prior works~\cite{cao2021identifiability,shehab2024learning}. Therefore, this paper focuses on inferring $\delta_\mathbf{u}$ and $L$. In this section, we present a SAT problem allowing to learn $\delta_\mathbf{u}$ and $L$, i.e., a labeled reward machine model denoted $\hat{\mathcal{G}}$. Without loss of generality, let us write $\mathcal{S}=\{1,\dots,|\mathcal{S}|\}$, $\mathcal{U}=\{1,\dots,|\mathcal{U}|\}$, $\mathrm{AP}=\{1,\dots,|\mathrm{AP}|\}$, and consider $u_I=1$ and $L(1)=1$.

The transition function $\delta_\mathbf{u}$ and the labeling function $L$ are encoded by binary values as follows:
\begin{equation}\label{eq:encoding}
b_{jpi} =
\begin{cases}
1 & \text{if } \delta_\mathbf{u}(i,p)=j, \\
0 & \text{otherwise},
\end{cases} \ \ \text{ and }\ \ \mathbf{L}_{pk}= \begin{cases}
1 & \text{if } L(k)=p, \\
0 & \text{otherwise},
\end{cases}
\end{equation}
where $i,j=1,\dots,|\mathcal{U}|$, $p=1,\dots,|\mathrm{AP}|$, and $k=1,\dots,|\mathcal{S}|$. The core constraints in the SAT formulation arise from negative examples, which follow from the following lemma.
\begin{lemma}\label{lemma:negativeExample}
Let $\tau,\tau'\in\mathcal{S}^*$ be two state trajectories.  If $\hpi(a|s,\tau)\neq\hpi(a|s,\tau')$ for some $(s,a)\in\mathcal{S}\times\mathcal{A}$, then $\delta_\mathbf{u}^*(u_I,L(\tau))\neq\delta_\mathbf{u}(u_I,L(\tau'))$.
\end{lemma}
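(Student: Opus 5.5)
The natural route is the contrapositive, read directly off the definition of the history policy in \eqref{eq:induced}. (The second $\delta_\mathbf{u}$ in the statement should be read as the extended map $\delta_\mathbf{u}^*$, as in the first.) We suppose that
\[
\delta_\mathbf{u}^*(u_I,L(\tau))=\delta_\mathbf{u}^*(u_I,L(\tau'))=:u,
\]
and show that $\hpi(\cdot|s,\tau)=\hpi(\cdot|s,\tau')$ for every $s$ at which both sides are defined. This contradicts the hypothesis $\hpi(a|s,\tau)\neq\hpi(a|s,\tau')$.

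The key steps, in order, are as follows.

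First, since the hypothesis compares two values of $\hpi$, both $(s,\tau)$ and $(s,\tau')$ must lie in $\mathrm{Dom_h}$. By the definition of $\mathrm{Dom_h}$ as the largest set on which the right-hand side of \eqref{eq:induced} is well defined, the pairs $(s,\delta_\mathbf{u}^*(u_I,L(\tau)))$ and $(s,\delta_\mathbf{u}^*(u_I,L(\tau')))$ both belong to $\mathrm{Dom_{Prod}}$.

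Second, \eqref{eq:induced} gives
\[
\hpi(a|s,\tau)=\pi_{\mathrm{Prod}}(a|s,u)=\hpi(a|s,\tau')\quad\text{for all } a\in\mathcal{A}.
\]
This uses that $\pi_{\mathrm{Prod}}$ is a genuine function of the product state $(s,u)$. That holds because the MaxEnt optimal policy \eqref{eq:opt_prob} is unique, as cited from \cite{ziebart2008maximum}, and Markovian in the product state.

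Third, these equalities contradict the assumed inequality for the given $(s,a)$, so the nodes must differ.

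The only step requiring care is the bookkeeping of the domain and the well-definedness of $\pi_{\mathrm{Prod}}$ as a map on product states. Here I would stress that $L(\tau)$ denotes the elementwise application of $L$ to $\tau$, so $\delta_\mathbf{u}^*(u_I,L(\tau))$ is a single, determined node because $\delta_\mathbf{u}$ is deterministic. Beyond that, the argument is a one-line consequence of the fact that the history policy factors through the RM node reached.
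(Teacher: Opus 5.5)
Your proposal is correct and matches the paper's argument. The paper's proof is the one-line remark that the lemma follows directly from the definition \eqref{eq:induced}; your contrapositive spells out why: $\hpi(\cdot|s,\tau)$ depends on $\tau$ only through the node $\delta_\mathbf{u}^*(u_I,L(\tau))$. You are right to read the second $\delta_\mathbf{u}$ as $\delta_\mathbf{u}^*$. Appealing to uniqueness of the MaxEnt optimum is harmless but not needed, since $\pi_{\mathrm{Prod}}$ is by definition a map on $\mathrm{Dom_{Prod}}\subseteq\mathcal{S}\times\mathcal{U}$.
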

\begin{proof}
It follows directly from the definition of the history policy (equation~\eqref{eq:induced}).
\end{proof}
A pair of state trajectories $\{\tau,\tau'\}$ that satisfies the assumption of Lemma~\ref{lemma:negativeExample} is called a \emph{negative example}, meaning the atomic proposition trajectories $L(\tau)$ and $L(\tau')$ should lead to different reward machine nodes starting at $u_I$. The following set collects all negative examples of length at most $l$:
\begin{equation}
\mathcal{E}^-_l =
\left\{
\{\tau,\tau'\} \;\middle|\;
\hpi^l(a|s,\tau) \neq \hpi^l(a|s,\tau')
\text{ for some } (s,a)\in\mathcal{S}\times\mathcal{A}
\right\}.
\end{equation}
Note that $\tau$ and $\tau'$ may have different lengths (both not larger than $l$).
For each pair $\{\tau,\tau'\}\in\mathcal{E}^-_l$, Lemma~\ref{lemma:negativeExample} imposes some constraints on $\delta_\mathbf{u}$ and $L$. To write these constraints via our binary encoding we need to introduce some notation. First, let us define the binary matrices $(B_p)_{ji} = b_{jpi}$ and note that they satisfy $(B_p)_{ji}=1$ if and only if $\delta_\mathbf{u}(i,p)=j$. {In words, $B_p$ represents the transitions in the RM for a given atomic proposition $p$.} Next, for a state $k\in\mathcal{S}$, consider the matrix
\begin{equation}
M_k =
(B_1 \wedge^\star \mathbf{L}_{1,k})
\;\vee\;
\cdots
\;\vee\;
(B_{|\mathrm{AP}|} \wedge^\star \mathbf{L}_{|\mathrm{AP}|,k}),
\end{equation}
where $\wedge^\star$ denotes point-wise conjunction between a Boolean matrix and a Boolean scalar. This binary matrix satisfies $(M_k)_{ji}=1$ if and only if $\delta_\mathbf{u}(i,L(k))=j$. In words, $M_k$ represents the transitions in the RM for a given MDP state $k$. Finally, for a state trajectory $\tau=(k_1,\dots,k_t)\in\mathcal{S}^t$, consider the binary vector
$$
v_\tau=M_{k_t}M_{k_{t-1}}\dots M_{k_1} \begin{bmatrix}1& 0 & \cdots & 0\end{bmatrix}^\top,
$$
which satisfies $(v_\tau)_i=1$ if and only if $\delta^*(u_I,L(\tau))=i$ (let us remind that we assumed $u_I=1$). {Here, $v_\tau$ represents the node at which the RM will be after emitting the labels of the state trajectory $\tau$.} For a pair of negative examples $\{\tau,\tau'\}\in\mathcal{E}^-_l$, the condition $\delta_\mathbf{u}^*(u_I,L(\tau))\neq\delta_\mathbf{u}^*(u_I,L(\tau'))$ given by Lemma~\ref{lemma:negativeExample} can be written $v_\tau\neq v_{\tau'}$.

Overall, the SAT problem that encodes the learning of $\delta_\mathbf{u}$ and $L$ is the following:
\begin{problem} \label{prob:SAT}
For a fixed $l$, find $b_{jpi}$ and $\mathbf{L}_{pk}$ for $i,j=1,\dots,|\mathcal{U}|$, $p=1,\dots,|\mathrm{AP}|$, and $k=1,\dots,|\mathcal{S}|$ such that:
\begin{subequations}
\begin{align}
\sum_j b_{jpi} &= 1 && (\delta_\mathbf{u} \text{ is a function}) \label{eq:SAT:delt_u} \\
\sum_p \mathbf{L}_{pk} &= 1 && (L \text{ is a function}) \label{eq:SAT:L}\\
\mathbf{L}_{1,1} &= 1 && (\text{Anchoring: } L(1)=1) \label{eq:SAT:anchoring}\\
\forall\{\tau,\tau'\}\in\mathcal{E}_l^-:\ v_\tau &\neq v_{\tau'} && (\text{Compatibility with negative examples}) \label{eq:SAT:negativeExamples}\\
b_{jpi}=1 \Rightarrow b_{jpj} &= 1 && (\text{Non-stuttering, optional constraint}) \label{eq:SAT:stuttering}
\end{align}
\end{subequations}
where each constraint must hold for all free indices.
\end{problem}

Constraints~\eqref{eq:SAT:delt_u} and~\eqref{eq:SAT:L} ensure that $\delta_\mathbf{u}$ and $L$ are well defined, respectively. Constraint~\eqref{eq:SAT:anchoring} removes some solutions which are equivalent up to permutation. Constraint~\eqref{eq:SAT:negativeExamples} enforces the conditions given by Lemma~\ref{lemma:negativeExample}. Finally, constraint~\eqref{eq:SAT:stuttering} allows to enforce some prior knowledge on the reward machine, when such information exits. More precisely, it enforces
\begin{equation*}
\forall i,j,p:\delta_\mathbf{u}(i,p)=j\Rightarrow\delta_\mathbf{u}(j,p)=j
\end{equation*}
which holds if the underlying task is multi-stage and duration-insensitive (i.e., stutter-invariant). This constraint prevents repeated self-transitions under the same proposition and enables trace compression \cite{icarte2023learning,shehab2025learning}. Note that in Problem~\ref{prob:SAT}, it is enough to know an upper bound $u_\mathrm{max}$ on $|\mathcal{U}|$. The same is true for $|\mathrm{AP}|$, but one can always choose $|\mathrm{AP}|=|\mathcal{S}|$. Indeed, if there are more atomic propositions than states, one can consider only the atomic propositions in $L(\mathcal{S})$ whose cardinality is at most $|\mathcal{S}|$. Next, we present the main theoretical result of this section. Proposition~\ref{prop:sufficientDepth} below specifies the required $l^*$ from Section~\ref{sec:problem}.

\begin{myproposition}{Sufficient Depth}{sufficientDepth}
Given an MDP model, an upper bound $u_{\mathrm{max}}$ on the number of nodes of the underlying reward machine and the depth-$l^*$ restriction $\hpi^{l^*}$ of some history policy $\hpi$, where $l^* = |\mathcal{S}|u_{\mathrm{max}}^2$, Problem~\ref{prob:SAT} is satisfiable with $l=l^*$ if and only if it is satisfiable for all $l>l^*$.
\end{myproposition}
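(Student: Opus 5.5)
The plan is to prove the two directions separately. The easy direction is monotonicity of the constraint sets. The substantive direction uses a pumping (shortest-path) argument on a product graph whose vertex set has size $|\mathcal{S}|u_{\max}^2$, which is exactly $l^*$.

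\emph{From larger depths to $l^*$.} Since $\hpi^{l}$ is a restriction of $\hpi^{l'}$ for $l\le l'$, we have $\mathcal{E}^-_{l^*}\subseteq\mathcal{E}^-_{l}$ for every $l>l^*$. Constraints \eqref{eq:SAT:delt_u}, \eqref{eq:SAT:L}, \eqref{eq:SAT:anchoring} and \eqref{eq:SAT:stuttering} do not depend on $l$. Hence any solution of Problem~\ref{prob:SAT} at some $l>l^*$ is also a solution at $l^*$.

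\emph{From $l^*$ to all larger depths.} Fix a solution $(\hat\delta_\mathbf{u},\hat L)$ at depth $l^*$, with $|\hat{\mathcal{U}}|\le u_{\max}$. Write $u(\tau)=\delta_\mathbf{u}^*(u_I,L(\tau))$ for the true node and $\hat u(\tau)=\hat\delta_\mathbf{u}^*(u_I,\hat L(\tau))$ for the hypothesised node.

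The first observation is that, by \eqref{eq:induced}, whether $\{\tau,\tau'\}$ is a negative example depends only on the true nodes $u(\tau),u(\tau')$ together with which MDP states $s$ make $(s,\tau)$ and $(s,\tau')$ lie in $\mathrm{Dom_h}$. Membership in $\mathrm{Dom_h}$ is governed by MDP-feasibility of the trajectory and its extension by $s$. This feasibility depends only on the last state of $\tau$, via $\mathcal{P}$ and $\mu_0$, and on reachability of $(s,u(\tau))$ in the product.

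Accordingly, I will build a directed graph $\mathcal{H}$ as follows.
\begin{itemize}
\item Its vertices are triples $(k,u,\hat u)\in\mathcal{S}\times\mathcal{U}\times\hat{\mathcal{U}}$, which number at most $|\mathcal{S}|u_{\max}^2$.
\item There is an edge $(k,u,\hat u)\to(k',\delta_\mathbf{u}(u,L(k')),\hat\delta_\mathbf{u}(\hat u,\hat L(k')))$ whenever $\mathcal{P}(k'|k,a)>0$ for some $a$.
\item Initial vertices are $(k_0,\delta_\mathbf{u}(u_I,L(k_0)),\hat\delta_\mathbf{u}(u_I,\hat L(k_0)))$ with $\mu_0(k_0)>0$.
\end{itemize}
Each feasible trajectory $\tau$ corresponds to a walk in $\mathcal{H}$ of the same length, ending at $(\mathrm{last}(\tau),u(\tau),\hat u(\tau))$. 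Conversely, the policy values $\hpi(\cdot|s,\tau)$ and the set of admissible $s$ depend only on this endpoint.

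Now suppose, for contradiction, that some $\{\tau,\tau'\}\in\mathcal{E}^-_l$ with $l>l^*$ violates \eqref{eq:SAT:negativeExamples}, so that $\hat u(\tau)=\hat u(\tau')$. Replace the walk of $\tau$ by a shortest walk in $\mathcal{H}$ from an initial vertex to the same endpoint, and do the same for $\tau'$. A shortest walk visits no vertex twice, so it has length at most $|\mathcal{S}|u_{\max}^2=l^*$. The resulting trajectories $\sigma,\sigma'$ satisfy three properties:
\begin{itemize}
\item they are feasible, of length at most $l^*$;
\item $u(\sigma)=u(\tau)$, $u(\sigma')=u(\tau')$, and the last states are unchanged, so the same $(s,a)$ witnesses $\hpi(a|s,\sigma)\neq\hpi(a|s,\sigma')$;
\item $\hat u(\sigma)=\hat u(\sigma')$.
\end{itemize}
Thus $\{\sigma,\sigma'\}\in\mathcal{E}^-_{l^*}$ is violated by the solution at depth $l^*$, a contradiction. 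The two trajectories are shortened independently, which is why only a single product of size $|\mathcal{S}|u_{\max}^2$ is needed rather than a pair-product.

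The main obstacle I expect is the careful bookkeeping of $\mathrm{Dom_h}$. I must make sure that the shortened trajectories keep $(s,\sigma)$ in the domain. In particular, $s$ must still be a feasible successor, which is why the last MDP state is kept in the vertex, and $(s,u(\sigma))$ must still be accessible in the product MDP. I would handle this by checking that every domain condition is a function of the endpoint triple only. The off-by-one in the trajectory length convention (the initial $\bar s_\emptyset$ step) may require counting vertices versus edges carefully, but a simple path has fewer edges than $|\mathcal{S}|u_{\max}^2$ vertices, so the bound $l^*$ has slack.
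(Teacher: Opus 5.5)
Your proposal is correct and follows the paper's proof. The easy direction is the inclusion $\mathcal{E}^-_{l^*}\subseteq\mathcal{E}^-_l$. The hard direction shortens each offending trajectory independently in the product of the MDP with the synchronized true/learned labeled RM model, which has at most $|\mathcal{S}|u_{\max}^2=l^*$ states; this keeps the last state, the true node and the learned node, so you obtain a violated pair in $\mathcal{E}^-_{l^*}$. Your shortest-walk formulation is a one-shot version of the paper's iterated cycle removal, and your explicit check that membership in $\mathrm{Dom_h}$ depends only on the endpoint triple supplies a detail the paper leaves implicit.
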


\subsection{Proof of Proposition~\ref{prop:sufficientDepth}}
In order to prove Proposition~\ref{prop:sufficientDepth}, let us first introduce the notion of synchronized labeled reward machine model. It allows to run two labeled reward machine models in parallel. It will be used to compare the ground truth labeled reward machine with the learned one.

\begin{definition}\label{def:dsync}
    Let $\mathcal{G}_1 = (\mathcal{U}_1, u_I^1, \mathrm{AP}^1, \delta_{\mathbf{u}}^1, \mathcal{S}, L^1),\  \mathcal{G}_2 = (\mathcal{U}_2, u_I^2, \mathrm{AP}^2, \delta_{\mathbf{u}}^2,\mathcal{S}, L^2)$ be two labeled reward machine models with the domains of $L^1$ and $L^2$ being a same set $\mathcal{S}$. The \textbf{synchronized labeled reward machine model} is the labeled reward machine model defined as follows:
    \begin{align*}
    \mathcal{G}^{\mathrm{sync}} &= (\mathcal{U}^{\mathrm{sync}}, u_I^{\mathrm{sync}}, \mathrm{AP}^{\mathrm{sync}}, \delta_{\mathbf{u}}^{\mathrm{sync}}, \mathcal{S}, L^{\mathrm{sync}}) \\
    \mathcal{U}^{\mathrm{sync}} &= \mathcal{U}_1\times \mathcal{U}_2, \\
    u_I^{\mathrm{sync}} &= (u_I^1,u_I^2), \\
    \mathrm{AP}^{\mathrm{sync}} &= \mathrm{AP}^1\times\mathrm{AP}^2\\
    \delta_{\mathbf{u}}^{\mathrm{sync}}((u^1,u^2), (l^1,l^2)) &= (\delta_{\mathbf{u}}^1(u^1,l^1), \delta_{\mathbf{u}}^2(u^2,l^2))\\
    L^{\mathrm{sync}}(s) &= (L^1(s),L^2(s)).
\end{align*}
\end{definition}
The following definition introduces cycles in a product MDP. The core of the proof relies on removing cycles in the synchronized product MDP model.

\begin{definition}
Let $\mathcal{M}\setminus r$ be an MDP model and $\mathcal{G}$ be a (compatible) labeled reward machine model. Let $\mathcal{M}_{\mathrm{Prod}}$ be the corresponding product MDP model. Given a state trajectory $\tau = (s_1,s_2,\cdots, s_t)\in\mathcal{S}^*$, we say that a subsequence $s_{i:j}$ of $\tau$ is a \textbf{cycle} in $\mathcal{M}_{\mathrm{Prod}}$ if $s_i = s_j$ and $\delta_{\textbf{u}}^*(u_I,L(s_{:i})) = \delta_{\textbf{u}}^*(u_I,L(s_{:j}))$.
\end{definition}

\begin{proof}[of Proposition~\ref{prop:sufficientDepth}]

We will refer to Problem~\ref{prob:SAT} with a given $l$ by $\text{SAT}_l$. Let $j>l^*$ be a natural number.

First, let us prove the ``if'' direction. Assume that $\text{SAT}_j$ has a solution. Then, it satisfies constraint~\eqref{eq:SAT:negativeExamples} for all $\{\tau,\tau'\}\in\mathcal{E}^-_j$. But since $l^*<j$, $\mathcal{E}^-_{l^*}\subseteq\mathcal{E}^-_j$. Since removing constraints can not make a solution infeasible, the solution of $\text{SAT}_j$ is also a solution of $\text{SAT}_{l^*}$.

Second, let us prove the ``only if'' direction. By contradiction, assume that $\text{SAT}_{l^*}$ has a solution which is not a solution to $\text{SAT}_j$. This solution defines functions $\hat{\delta}_\mathbf{u}$ and $\hat{L}$ through the binary encoding~\eqref{eq:encoding}. Since this is not a solution to $\text{SAT}_j$, there exists a pair of negative examples $\{\tau,\tau'\}\in\mathcal{E}^-_j$ which does not satisfy condition~\eqref{eq:SAT:negativeExamples}. That is, $\tau,\tau'$ satisfy
\begin{equation}\label{eq:proof:negativeExample}
\hpi(a|s,\tau) \neq \hpi(a|s,\tau')
\end{equation}
for some $(s,a)\in\mathcal{S}\times\mathcal{A}$, and
\begin{equation}\label{eq:proof:contradiction}
\hat{\delta}_\mathbf{u}(u_I,\hat{L}(\tau))=\hat{\delta}_\mathbf{u}(u_I,\hat{L}(\tau')).
\end{equation}
Denote by $\mathcal{G}^\mathrm{sync}$ the synchronized product between the true labeled reward machine model $\mathcal{G}=(\mathcal{U},u_I,\mathrm{AP},\delta_\mathbf{u},\mathcal{S},L)$ and the learned one $\hat{\mathcal{G}}$. Consider the product MDP model $\mathcal{M}_{prod,m}$ obtained from the MDP model and the synchronized labeled RM model. A state in $\mathcal{M}_{prod,m}$ is a tuple $(s,u,\hat{u})\in\mathcal{S}\times\mathcal{U}\times\hat{\mathcal{U}}$, which shows that $\mathcal{M}_{prod,m}$ contains $|\mathcal{S}| |\mathcal{U}| |\hat{\mathcal{U}}|\leq |\mathcal{S}|u_\mathrm{max}^2=l^*$ states.

Now, consider the trajectory $\bar{\tau}$ (resp. $\bar{\tau}'$) obtained by removing cycles of $\tau$ (resp. $\tau'$) in $\mathcal{M}_{prod,m}$. Since $\mathcal{M}_{prod,m}$ has at most $l^*$ states, this can be repeated until $|\bar{\tau}|\leq l^*$ (resp. $|\bar{\tau}'|\leq l^*$). Since only cycles have been removed, the corresponding nodes in the synchronized labeled RM model stay unchanged, i.e.,
$$
\delta^{\mathrm{sync},*}_\mathbf{u}(u_I^\mathrm{sync},L^\mathrm{sync}(\tau)) = \delta^{\mathrm{sync},*}_\mathbf{u}(u_I^\mathrm{sync},L^\mathrm{sync}(\bar{\tau})),
$$
and similarly for $\tau'$. By definition of the synchronized labeled RM model, this gives
\begin{align}
\delta^*_\mathbf{u}(u_I,L(\tau))&=\delta^*_\mathbf{u}(u_I,L(\bar{\tau})), \label{eq:proof:true} \\
\hat{\delta}^*_\mathbf{u}(u_I,\hat{L}(\tau))&=\hat{\delta}^*_\mathbf{u}(u_I,\hat{L}(\bar{\tau})), \label{eq:proof:learn}
\end{align}
and similarly for $\tau'$.

It follows from \eqref{eq:proof:true} and \eqref{eq:induced} that $\hpi(a|s,\tau)=\hpi(a|s,\bar{\tau})$, and similarly for $\tau'$. Consequently, \eqref{eq:proof:negativeExample} implies $\hpi(a|s,\bar{\tau}) \neq \hpi(a|s,\bar{\tau}')$. That is, the pair $\{\bar{\tau},\bar{\tau}'\}$ is a negative example, i.e., $\{\bar{\tau},\bar{\tau}'\}\in\mathcal{E}^-_{l^*}$. In addition, it follows from \eqref{eq:proof:learn} and \eqref{eq:proof:contradiction} that $\hat{\delta}_\mathbf{u}(u_I,\hat{L}(\bar{\tau}))=\hat{\delta}_\mathbf{u}(u_I,\hat{L}(\bar{\tau}'))$. Overall, we have shown that the pair $\{\bar{\tau},\bar{\tau}'\}\in\mathcal{E}^-_{l^*}$ contradicts Lemma~\ref{lemma:negativeExample} which is encoded as constraint~\eqref{eq:SAT:negativeExamples}. Therefore, the solution to $\text{SAT}_{l^*}$ does not satisfy constraint~\eqref{eq:SAT:negativeExamples}, a contradiction.
\end{proof}

\subsection{Active Extension of the History Policy}
One major bottleneck for solving Problem~\ref{prob:SAT} comes from encoding all the negative examples in (\ref{eq:SAT:negativeExamples}) given a depth-$l$ restriction of the history policy. Since the number of state trajectories for a standard stochastic MDP grows exponentially in the depth, representing (or storing) the history policy becomes increasingly infeasible, even for moderate depths and small state spaces. However, our key observation is that exhausting all the possible paths of the history policy is not necessary to shrink the solution set. We formalize this observation as follows.
\begin{myobservation}{Not All Trajectories are Created Equal}{obs_pi}
    Suppose that we solved the SAT problem with a depth-$l$ restriction of the history policy, which is less than the sufficient depth $l^*$, and obtained a solution set of candidate labeled reward machine models. Let $\tau$ be a length $(l+k)$ state trajectory, for any $k \geq 1$, and let $\tau_{:l}$ be the first $l$ states in $\tau$. Finally, let $\delta_{\mathbf{u}},L$ be a ground truth transition function and labeling function respectively. If $\delta^*_{\mathbf{u}}(u_I, L(\tau)) = \delta^*_{\mathbf{u}}(u_I, L(\tau_{:l}))$, then $\tau$ will not reduce the candidate solution set, as it cannot introduce any new negative examples.
\end{myobservation}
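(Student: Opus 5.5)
The argument reduces every negative example produced by $\tau$ to one that already lies in $\mathcal{E}^-_l$, exactly as in the cycle-removal step of the proof of Proposition~\ref{prop:sufficientDepth}. Here, however, we only use the ground-truth machine, so we do not need the synchronized model of Definition~\ref{def:dsync}.

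First, make the claim precise. Querying the extension $\tau$ adds to the data the values $\hpi(\cdot\mid s,\tau)$ for every admissible current state $s$. It therefore enlarges the set of negative examples by pairs $\{\tau,\tau'\}$ with $\hpi(a\mid s,\tau)\neq\hpi(a\mid s,\tau')$ for some $(s,a)$, where $\tau'$ ranges over the trajectories already available. Such a pair is the only way $\tau$ can enter constraint~\eqref{eq:SAT:negativeExamples}. The claim to prove is that every such new pair either is already in $\mathcal{E}^-_l$ or can be replaced by one in $\mathcal{E}^-_l$. The replacement must be such that every candidate $(\hat\delta_\mathbf{u},\hat L)$ it eliminates has already been eliminated.

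Second, use the hypothesis $\delta^*_\mathbf{u}(u_I,L(\tau))=\delta^*_\mathbf{u}(u_I,L(\tau_{:l}))$. By the definition~\eqref{eq:induced} of the history policy, $\hpi(a\mid s,\tau)=\pi_{\mathrm{Prod}}(a\mid s,\delta^*_\mathbf{u}(u_I,L(\tau)))=\hpi(a\mid s,\tau_{:l})$ for all $(s,a)$. Hence $\tau$ produces exactly the same policy values as $\tau_{:l}$, which lies in the domain of $\hpi^l$. So any $\tau'$ forming a negative example with $\tau$ also forms one with $\tau_{:l}$, and $\{\tau_{:l},\tau'\}\in\mathcal{E}^-_l$ whenever $|\tau'|\le l$. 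Since $\tau$ and $\tau_{:l}$ generate identical rows of the history policy, querying $\tau$ carries no information about the true node beyond what $\tau_{:l}$ already gives.

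The main obstacle is the remaining constraint. Constraint~\eqref{eq:SAT:negativeExamples} on the pair $\{\tau,\tau'\}$ concerns $\hat v_\tau$, not $\hat v_{\tau_{:l}}$. A candidate might satisfy $\hat v_{\tau_{:l}}\neq\hat v_{\tau'}$ and still have $\hat v_\tau=\hat v_{\tau'}$. To handle this, I would interpret ``cannot introduce any new negative examples'' at the level of the true node partition. Negative examples are defined purely through $\hpi$, and $\hpi(\cdot\mid s,\tau)$ coincides with $\hpi(\cdot\mid s,\tau_{:l})$. So the set of distinguishing policy values, the information available to the learner, is unchanged, and $\tau$ adds no equivalence-class distinction that is not already witnessed at depth $l$.

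If the stronger solution-set statement is required, I would choose $l$ large enough that $\tau$ can be shortened by cycles of the synchronized product, as in the proof of Proposition~\ref{prop:sufficientDepth}. Then $\hat v$ is also preserved and the pair reduces into $\mathcal{E}^-_l$. Failing that, I would state the observation under this interpretation and note that it is a heuristic guide for the active query strategy rather than a strict theorem.
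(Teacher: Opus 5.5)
Your second paragraph is exactly the justification the paper relies on. The observation has no separate proof, and its clause ``cannot introduce any new negative examples'' is just \eqref{eq:induced}, which gives $\hpi(\cdot\mid s,\tau)=\hpi(\cdot\mid s,\tau_{:l})$.

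The obstacle you raise next is genuine, and it is more than a missing step: for $l<l^*$ the literal claim that $\tau$ cannot shrink the solution set is false in general. Here is a counterexample.
\begin{itemize}
\item Take an MDP with states $1,2$, where every state sequence is feasible, and true labels $p,q$.
\item Let the true machine stay at $u_I$ under $p$ and move to an absorbing node $u_2$ under $q$. Choose rewards so that the optimal product policies at $u_I$ and $u_2$ differ at some state.
\item For $l<m\le u_{\mathrm{max}}-1$, let the candidate have $\hat L=L$ and nodes $\hat 0,\hat 1,\dots,\widehat{m-1},X$. Under $\hat p$ it moves $\hat j$ to $\widehat{j+1}$ and $\widehat{m-1}$ to $X$; $\hat q$ sends every node to $X$; and $X$ is absorbing.
\end{itemize}
Every negative example of depth at most $l$ pairs a trajectory made only of state $1$ with one containing state $2$. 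The first has candidate node $\widehat{|\tau|}\neq X$, since $|\tau|\le l<m$, and the second has candidate node $X$. So this candidate solves $\text{SAT}_l$.

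Now take the length-$m$ trajectory $\tau=(1,\dots,1)$. It satisfies the hypothesis, since $\tau$ and $\tau_{:l}$ both lead to $u_I$. Yet with the one-state trajectory $\tau'=(2)$ it forms a negative example with $\hat v_\tau=\hat v_{\tau'}=X$. Querying $\tau$ therefore eliminates the candidate.

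The optional constraint~\eqref{eq:SAT:stuttering} does not rescue the claim. Add a third state with true label $p$ but its own candidate label, and let the candidate count alternations between the two. Hence your second paragraph gives the most that holds: $\tau$ adds no new distinction between true nodes, because every new pair $\{\tau,\tau'\}$ is shadowed by $\{\tau_{:l},\tau'\}$. Reading the observation as a heuristic is consistent with how the paper uses it.

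Your fallback, however, does not repair the strong version. Removing a cycle of the synchronized product requires the learned node to repeat together with the state and the true node, and the hypothesis on $\tau$ says nothing about learned nodes. In the example the candidate nodes $\hat 1,\dots,\widehat{m-1},X$ along $\tau$ are all distinct, so there is no cycle to remove.

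Conversely, suppose $l$ is large enough that every trajectory can be shortened to length at most $l$ by synchronized cycles for every candidate. Then the argument in the proof of Proposition~\ref{prop:sufficientDepth} shows that no extension whatsoever can shrink the solution set. The hypothesis $\delta^*_\mathbf{u}(u_I,L(\tau))=\delta^*_\mathbf{u}(u_I,L(\tau_{:l}))$ then plays no role, so that route proves a different statement, not the observation.
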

While we do not know $\delta_{\mathbf{u}}$ or $L$, the above observation aims at highlighting that many paths in the history policy are redundant when it comes to shrinking the candidate solution set. Hence, this motivates designing an active extension algorithm, which reduces the memory and computation requirements of fully extending the history policy. Our strategy adopts a \emph{volume-removal} approach to active learning, where queries are selected to significantly reduce the number of hypotheses consistent with current observations. Similar volume-reduction techniques have proven effective in preference-based reward learning~\cite{sadigh2017active,biyik2018batch,biyik2019asking,dasgupta2004analysis}.

In our setting, we query trajectory extensions (histories) that are expected to most rapidly eliminate candidate labeled reward machine models consistent with the current depth history policy. For a given depth $l < l^*$, let the set of feasible solutions of Problem~\ref{prob:SAT} be denoted $\mathcal{P}_{\mathrm{feasible}} = \{\hat{\mathcal{G}_i}\}_{i=1}^N$, where $N$ is the total number of feasible solutions. This depth $l$ represents the \emph{burn-in} cost in order to obtain a reasonably sized solution set. The key idea is to search for state trajectory pairs $\{\tau,\tau'\}$ for which half of the labeled reward machine models in the solution set end up in the same node, and the other half does not. If any pair $\{\tau,\tau'\}$ turns out to be an actual negative example when querying the extended ground truth history policy, then we have eliminated half of the reward machines in the solution set by just adding a single negative example. To formalize this, let $\mathfrak{B}$ be the query budget, which represents the maximum number of state trajectory pairs that we can query the history policy by. Our active learning algorithm runs as follows: 

\begin{myalgorithm}{Active Extension Algorithm}{alg_label}
\begin{enumerate}
\item \textbf{Initialize:} Start with an empty candidate set $\mathcal{C} = \emptyset.$
\item \textbf{Subsample:} Sample a subset of the feasible SAT solutions $\mathcal{P}_{\mathrm{active}} \triangleq \{\hat{\mathcal{G}}_i\}_{i=1}^{N_{\mathrm{active}}}$.
\item \textbf{Generate Candidates:} For each $\hat{\mathcal{G}}_i = (\hat{\mathcal{U}}, \hat u_I, {\mathrm{AP}^i}, \hat \delta_\mathbf{u}^i,\mathcal{S}, \hat L^i) \in \mathcal{P}_{\mathrm{active}}$, sample a random target node $u_{\mathrm{target}} \in \hat{\mathcal{U}}$ and use randomized DFS \cite{motwani1996randomized} to find trajectory pairs $\{\tau, \tau'\}$ of length $l+1$ ending in the same MDP state such that $\hat \delta_{\mathbf{u}}^{i,*}(\hat u_I, \hat L^i(\tau)) = \hat \delta_{\mathbf{u}}^{i,*}(\hat u_I, \hat L^i(\tau')) = u_{\mathrm{target}}$. Add these pairs to $\mathcal{C}$.

\item \textbf{Evaluate Quality:} For each $\{\tau, \tau'\} \in \mathcal{C}$, for each labeled reward machine model $\hat{\mathcal{G}}_i \in \mathcal{P}_{\mathrm{active}}$ with transition function $\hat \delta_\mathbf{u}^i$ and labeling function $\hat L^i$, let $ u_i = \hat \delta_\mathbf{u}^i(u_I,\hat L^i(\tau )) , 
    u_i' = \hat \delta_\mathbf{u}^i(u_I,\hat L^i(\tau')),$ and define the \emph{quality} of $\{\tau,\tau'\}$  to be:
\begin{equation}
\texttt{quality}(\tau,\tau') = \min \left\{ \sum_{i=1}^{N_{\text{active}}} \mathbf{1}(u_i = u_i'), \sum_{i=1}^{N_{\text{active}}} \mathbf{1}(u_i \neq u_i') \right\},
\end{equation}
which counts how many models in $\mathcal{P}_{\mathrm{active}}$ predict a node collapse ($u_i = u_i'$) versus a node separation ($u_i \neq u_i'$) when traversed with $\{\tau,\tau'\}.$
\item \textbf{Query:} Sort the trajectory pairs based on the quality metric and query the history policy for the top $\mathfrak{B}$ pairs. If $\exists s^*,a^*$ such that $\hpi(a^*|s^*,\tau) \neq \hpi(a^*|s^*,\tau')$, add $\{\tau,\tau'\}$ to the set of negative examples.
\item \textbf{Refine:} Resolve the SAT problem incrementally with the new negative examples and update the feasible solution set $\mathcal{P}_{\mathrm{feasible}}$. 
\item \textbf{Iterate/Terminate:} If the feasible solution set has converged to a single model up-to-renaming, terminate; otherwise, increment $l = l+1$ and return to Step 2.
\end{enumerate}
\end{myalgorithm}
 We note that the quality measure for a trajectory pair, $\texttt{quality}(\tau, \tau')$, is maximized when the pair bisects the candidate solution set. Also, given that exhaustive enumeration of all possible paths in Step 3 is computationally prohibitive, we employ a randomized Depth First Search (DFS) algorithm \cite{motwani1996randomized} with an upper bound on the size of $\mathcal{C}$\footnote{In our experiments, we set $|\mathcal{C}| \leq 10,000.$}. This allows the exploration of deeper paths within the history policy tree than exhaustive search allows. Another algorithmic optimization we employ is re-solving the SAT problem incrementally by simply adding the newly discovered negative examples to the existing SAT instance. This allows us to increment the depth in Step 7 of the algorithm without resolving with the exhaustive history policy (essentially the starting burn-in depth is fixed). We evaluate the empirical performance of this active approach in the experiments section.

\section{Experiments}
To evaluate the effectiveness of our proposed framework, we consider a grid world navigation environment (Figure~\ref{fig:gridworldroom}). Each cell in this $4 \times 4$ grid structure represents an MDP state. During navigation, the robot occasionally slips into neighboring cells upon taking a step along one of the 4 cardinal directions. We consider two tasks: \texttt{pick\_n\_drop} and  \texttt{patrolABCD}. Figures~\ref{fig:pick_place_rooms} and~\ref{fig:gridworldroom} show the color-coded ground-truth labeling function. For example, the blue cell in Figure~\ref{fig:pick_place_rooms} denotes the pickup location and the red cells in Figure~\ref{fig:gridworldroom} represents proposition $\mathrm{A}$. These labels ($\mathrm{A,B,C}, \text{ etc.}$) could in principle denote an area that has certain properties (cold/hot), or contains landmarks (coffee/mail). It is important to emphasize that the labeling of these cells is hidden from our algorithm. 
\vspace{-0.5cm}

\begin{figure}[h]
    \centering
    \begin{subfigure}[t]{0.28\textwidth}
        \centering
        \includegraphics[width=\textwidth]{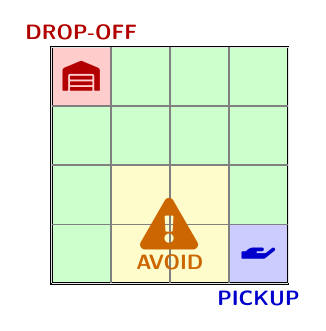}
        \caption{}
        \label{fig:pick_place_rooms}
    \end{subfigure}%
    \hspace{0.02\textwidth}%
    \begin{subfigure}[t]{0.34\textwidth}
        \centering
        \includegraphics[width=\textwidth]{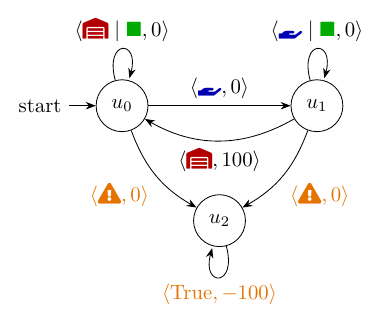}
        \caption{}
        \label{fig:pick_place_rm}
    \end{subfigure}%
    \hspace{0.02\textwidth}%
    \begin{subfigure}[t]{0.31\textwidth}
        \centering
        \resizebox{\textwidth}{!}{%
       \begin{tikzpicture}
    \begin{axis}[
        xlabel={Depth},
        ylabel={\# of Solutions},
        ymode=log,
        xmin=2.5, xmax=22,
        ymin=0.8, ymax=25000,
        xtick={4,6,8,10,12,14,16,18,20},
        extra y ticks={12},
        extra y tick labels={12},
        extra y tick style={
            grid=major,
            tick label style={color=red!80!black, font=\tiny},
            grid style={dashed, red!60!white}
        },
        grid=major,
        grid style={dashed, gray!30},
        legend style={at={(0.95,0.85)}, anchor=north east, font=\small},
    ]
    
    \addplot[color=blue!60!black, mark=square*, line width=1pt]
    coordinates {
        (4, 10430.24) (6, 10378.06) (8, 116.47) (9, 68.12) (10, 34.59) (12, 23.29) (14, 23.29) (16, 17.65) (18, 12.00) (20, 12.00)
    };
    \addlegendentry{$N_{\mathrm{active}}= 50$}

    \addplot[fill=green!60!black, fill opacity=0.15, draw=none, forget plot]
    coordinates {
        (4, 10633.3) (6, 10772.2) (8, 3235.8) (9, 101.8) (10, 39.9) (12, 12.0) (18, 12.0)
        (18, 12.0)   (12, 12.0)   (10, 4.8)    (9, 11.0)   (8, 1.0)    (6, 10082.9) (4, 10070.7)
    } \closedcycle;

    \addplot[color=green!60!black, mark=triangle*, line width=1pt]
    coordinates {
        (4, 10352.00) (6, 10427.60) (8, 733.87) (9, 56.40) (10, 22.40) (12, 12.00) (14, 12.00) (16, 12.00) (18, 12.00) (20, 12.00)
    };
    \addlegendentry{$N_{\mathrm{active}}= 100$}
    \addplot[fill=gray, fill opacity=0.15, draw=none, forget plot]
    coordinates {
        (4, 10627.89) (6, 10448.36) (8, 10673.61) (9, 10786.33) (10, 10758.70) (12, 10665.50) (14, 10705.23) (18, 10792.44)
        (18, 10236.41) (14, 10031.05) (12, 10082.22) (10, 10200.73) (9, 10209.95) (8, 9979.53) (6, 9947.92) (4, 10314.68)
    } \closedcycle;
    \addplot[color=gray, dashed, mark=x, line width=1pt]
    coordinates {
        (4, 10471.29) (6, 10198.14) (8, 10326.57) (9, 10498.14) (10, 10479.71) (12, 10373.86) (14, 10368.14) (18, 10514.43) (20, 10514.43)
    };
    \addlegendentry{Random}

    \end{axis}
\end{tikzpicture}
        }
        \caption{}
        \label{fig:solution_count_plot}
    \end{subfigure}
    \caption{(a) The warehouse grid world. (b) The pick and drop reward machine. (c) Solution count at increasing depths. The shaded area represents $\pm$ one standard deviation. Negative region is cut-off.}
    \label{fig:combined_figure}
\end{figure}
\vspace{-1cm}
\subsection{Task 1: \texttt{pick\_n\_drop}}
The robot’s objective is to perform a standard warehouse automation task: visit the pickup location (bottom right, Figure~\ref{fig:pick_place_rooms}) and then the drop-off location (top left, Figure~\ref{fig:pick_place_rooms}) in a cyclic manner while avoiding passing through a danger zone (the $4$ states colored yellow in the bottom middle, Figure~\ref{fig:pick_place_rooms}). The corresponding ground-truth reward machine is shown in Figure~\ref{fig:pick_place_rm}. Above each edge between two nodes, there is a tuple showing the label initiating the transition and the corresponding reward value. Critically, our learning algorithm does not have access to the reward machine transitions or the underlying warehouse arrangement. It operates solely on raw state trajectories extracted from the expert's patrolling policy. Using a depth-9 expert history policy, our algorithm returns $12$ solutions and successfully recovers the ground-truth reward machine. It correctly assigns distinct labels to the pickup, drop-off and avoid locations while grouping all remaining states under a common label. These solutions differ only by renaming, which does not change the task specification, making them equivalent to the ground truth. To evaluate the performance of our active extension algorithm, we start with a burn-in depth of $l = 3$. Due to the sparsity of negative examples at this depth, the number of feasible labeled reward machine models (i.e. feasible solutions to Problem~\ref{prob:SAT}) exceeds $150K$. We only keep $10K$ of these solutions. We run our active extension algorithm with $N_{\mathrm{active}} = \{50,100\}$ and a budget $\mathfrak{B} = 250$. We compare our active extension algorithm against a baseline that randomly generates feasible state trajectory pairs and queries the history policy. Results in Figure~\ref{fig:solution_count_plot} show the mean solution count across 15 independent trials. With $N_{\mathrm{active}} = 100$, our active extension algorithm converges to the ground-truth solution set in $100\%$ of the trials by depth $12$. Running our algorithm with $N_{\mathrm{active}} = 50$ also performed well (stabilizing at the ground truth solution by depth $18$), while the random baseline failed to find any restrictive constraints even as far as depth 20.\footnote{We note that any depth beyond $10$ is practically infeasible to solve using the full restriction of the history policy, highlighting the computational significance of our algorithm.}

\subsection{Task 2: \texttt{patrolABCD}}
The robot’s objective here is to patrol the rooms in the order $\mathrm{A} \to \mathrm{B} \to \mathrm{C} \to \mathrm{D}$ (Figure~\ref{fig:gridworldroom}). The task is encoded by the ground truth reward machine shown in Figure~\ref{fig:patrol_rm}. By using the depth-$9$ restriction of the expert's history policy, we recover the ground truth labeled reward machine model and the clustering of grid cells into their respective propositions up-to-renaming (this amounts to a total of $\mathbf{36}$ solutions\footnote{There is $6$ possible node naming permutations of the labeled reward machine model and $6$ naming permutations of the labeling assignments given that the label of the first state is anchored (Equation~\ref{eq:SAT:anchoring}), thus we have $6\times6 = 36$ total renaming solutions.}). As shown in Table~\ref{tab:resource_comparison}, this results in $|\mathcal{E}^-_{9}| \approx 414\text{M}$, meaning that we have over $414\text{M}$ negative examples. In our experiments, we group these negative examples by their terminal state and sample $5000$ of these negative examples at random from each group.  We also test our framework on a tetris variant of the room structures (Figure~\ref{gridworldroomtetris}), to which the results remain unchanged. That is, our algorithm perfectly recovers the labeling function up to renaming.

\begin{figure}[h]
    \centering
    \begin{subfigure}[t]{0.25\textwidth}
        \centering
        \includegraphics[width=\textwidth]{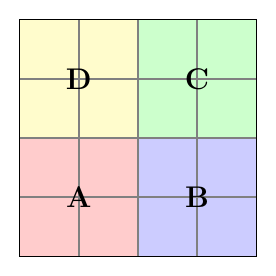}
        \caption{}
        \label{fig:gridworldroom}
    \end{subfigure}%
    \hspace{0.02\textwidth}%
    \begin{subfigure}[t]{0.4\textwidth}
        \centering
        \includegraphics[width=\textwidth]{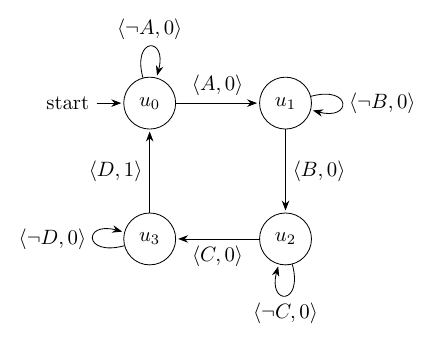}
        \caption{}
        \label{fig:patrol_rm}
    \end{subfigure}
    \hspace{0.02\textwidth}%
    \begin{subfigure}[t]{0.25\textwidth}
        \centering
        \includegraphics[width=\textwidth]{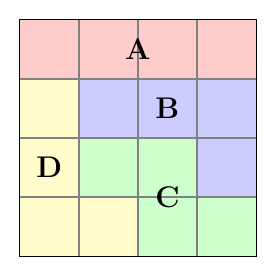}
        \caption{}
        \label{gridworldroomtetris}
    \end{subfigure}%
    \caption{(a) The room grid world. (b) The patrol reward machine. (c) The Tetris rooms grid world.}
    \label{fig:overall}
\end{figure}

For our active learning algorithm, we initialize the process with a burn-in depth of $l = 6$. At this depth, the history policy consists of $6895$ unique branches, and solving $\text{SAT}_{l=6}$ yields a hypothesis space of $1152$ distinct solutions. We constrain the negative example query budget to $\mathfrak{B} = 250$ per depth increment. We examine two sub-sampling sizes, $N_{\mathrm{active}} \in \{100,200\}$. Figure~\ref{fig:solution_count} shows the mean solution count across $30$ independent trials. The blue curve represents the solution count when using the full depth-$l$ restriction of the history policy, for $6 \leq l \leq 13$. Both $N_{\mathrm{active}}= 100$ and $N_{\mathrm{active}}= 200$ exhibit similar performance, achieving faster convergence to the ground truth solution set than the random sampling baseline. Specifically, with $N_{\mathrm{active}} = 200$, $96.6\%$ of trials converge to the ground-truth solution set (up-to-renaming) by depth $13$, while $83.3\%$ converge at the same depth with $N_{\mathrm{active}} = 100$. In contrast, the random sampling baseline still averages $378.0$ candidate solutions and exhibits a standard deviation nearly two orders of magnitude larger than that of our active extension method. 

\begin{figure}[ht]
    \centering
    \begin{subfigure}{0.49\textwidth}
        \centering
        \begin{tikzpicture}[scale=0.9,spy using outlines={rectangle,magnification=4.5,size=1.5cm,connect spies}]
\begin{axis}[
    width=\textwidth,
    title={\textbf{Solution Count}},
    xlabel={Depth},
    ylabel={\# of Solutions},
    xmin=5.5, xmax=13.5,
    ymin=-5, ymax=1450,
    xtick={6,7,8,9,10,11,12,13},
    ytick={400,800,1200},
    extra y ticks={36},
    extra y tick labels={36},
    extra y tick style={
        grid=major,
        tick label style={color=red!80!black, font=\tiny},
        grid style={dashed, red!60!white}
    },
    grid=major,
    grid style={dashed, gray!30},
    legend style={at={(0.95,0.95)}, anchor=north east, font=\tiny},
]

\addplot[color=blue!70!black, mark=*, line width=1pt,mark indices={0,1,2,3,4,5,6,7}]
coordinates {(6,1152)(7,144)(8,72)(9,36)(10,36)(11,36)(12,36)(13,36)};
\addlegendentry{Full $\hpi$}


\addplot[fill=orange, fill opacity=0.3, draw=none, forget plot]
coordinates {
    (6, 1152) 
    (7, 212.15) (8, 100.80) (9, 72.00) (10, 72.00) (11, 69.44) (12, 59.63) (13, 43.66) 
    (13, 30.74) (12, 29.17) (11, 33.76) (10, 72.00) (9, 72.00) (8, 57.60) (7, 114.25) 
    (6, 1152)
} \closedcycle;

\addplot[color=orange, mark=square*, line width=1pt, mark indices={0,1,2,3,4,5,6,7}]
coordinates {
    (6, 1152) 
    (7, 163.20) 
    (8, 79.20) 
    (9, 72.00) 
    (10, 72.00) 
    (11, 51.60) 
    (12, 44.40) 
    (13, 37.20)
};
\addlegendentry{$N_{\mathrm{active}} = 200$}

\addplot[fill=green!60!black, fill opacity=0.2, draw=none, forget plot]
coordinates {
    (6, 1152) 
    (7, 272.97) (8, 87.32) (9, 72.00) (10, 72.00) (11, 76.15) (12, 66.55) (13, 55.42) 
    (13, 28.58) (12, 31.85) (11, 41.45) (10, 72.00) (9, 72.00) (8, 61.48) (7, 91.83)  
    (6, 1152)
} \closedcycle;

\addplot[color=green!60!black, mark=triangle*, line width=1pt, mark indices={0,1,2,3,4,5,6,7}]
coordinates {
    (6, 1152) 
    (7, 182.40) 
    (8, 74.40) 
    (9, 72.00) 
    (10, 72.00) 
    (11, 58.80) 
    (12, 49.20) 
    (13, 42.00)
};
\addlegendentry{$N_{\mathrm{active}} = 100$}

\addplot[fill=gray, fill opacity=0.2, draw=none, forget plot]
coordinates {
     (6, 1152) (7, 1236.2) (8, 1245.7) (9, 1217.3) (10, 1099.8) (11, 995.9) (12, 785.7) (13, 666.1)
     (13, 89.9) (12, 155.1) (11, 261.7) (10, 407.4) (9, 549.1) (8, 923.9) (7, 1029.4) (6, 1152)
} \closedcycle;

\addplot[color=gray, mark=x, line width=1pt, mark indices={0,1,2,3,4,5,6,7,8}]
coordinates {
     (6, 1152) (7, 1132.8) (8, 1084.8) (9, 883.2) (10, 753.6) (11, 628.8) (12, 470.4) (13, 378.0)
};
\addlegendentry{Random}

\coordinate (target) at (axis cs:11.8,0);
\coordinate (viewer) at (axis cs:14,700); 

\spy on (target) in node [fill=white, draw, rounded corners] at (viewer);
\end{axis}
\end{tikzpicture}

        \caption{}
        \label{fig:solution_count}
    \end{subfigure}
    \begin{subfigure}{0.5\textwidth}
        \centering
        \begin{tikzpicture}[scale=0.75]
    \begin{semilogyaxis}[
        width=\textwidth,
        title={\textbf{Branching Complexity}},
        xlabel={Depth},
        ylabel={Number of Trajectories},
        xmin=5.5, xmax=13.5,
        grid=major,
        grid style={dashed, gray!30},
        legend pos=north west,
        ytick={10000, 1000000, 100000000},
        yticklabels={10K, 1M, 100M},
        log ticks with fixed point,
        legend style={font=\tiny},
        thick,
    ]
    
    \addplot[color=blue, mark=*, line width=1pt] coordinates {
         (6, 6895) (7, 26289) (8, 100247) (9, 382289) (10, 1529156)
    };
    \addlegendentry{Full $\hpi$}

    \addplot[color=blue, dashed, line width=1pt, forget plot, domain=10:13, samples=2] {1529156 * 4^(x-10)};

    \addplot[color=orange, mark=square*, line width=1pt] coordinates {
        (6, 6896) 
        (7, 7396) 
        (8, 7896) 
        (9, 8396) 
        (10, 8896) 
        (11, 9396) 
        (12, 9896) 
        (13, 10396)
    };
    \addlegendentry{Active - $\mathfrak{B}=250$}
    \end{semilogyaxis}
\end{tikzpicture}
        \caption{}
        \label{fig:branching_complexity}
    \end{subfigure}
    \caption{(a): Reduction in solution set size vs  depth. (b): Growth of the number of trajectories vs depth.}
    \label{fig:overall_comparison}
\end{figure}
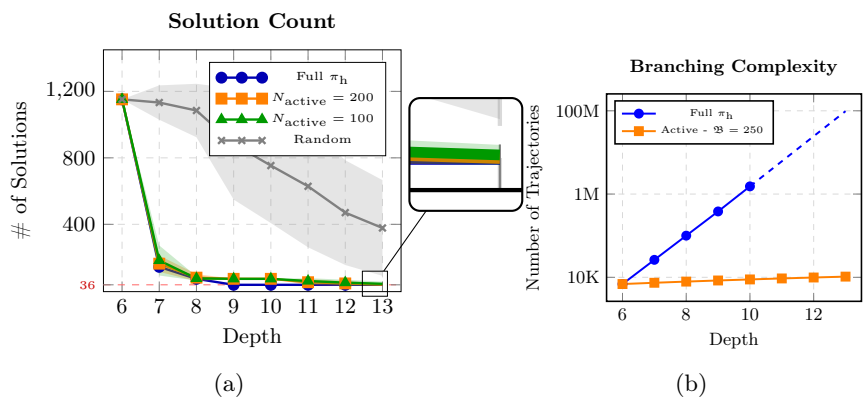
Beyond exact recovery, we show in Table~\ref{tab:resource_comparison} the efficiency gains of the active extension algorithm as compared to the full depth restriction history policy (exhaustive). The latter quickly hits a memory bottleneck, requiring approximately $24.76$ GB to store over $414$ M negative examples at depth $9$. This is due to the exponential growth in the number of state trajectories depicted in Figure~\ref{fig:branching_complexity}. In contrast, the active method selectively queries the environment and maintains only $10.3$ K branches and $0.292$ M negative examples even at depth $13$, reducing the memory required for negative examples to just $0.147$ GB, effectively reducing the memory requirement for negative examples by two orders of magnitude.
 
\begin{table}[ht]
    \centering
    \setlength{\tabcolsep}{5pt} 
    \renewcommand{\arraystretch}{1} 
    
    \begin{tabular}{lccccc}
        \toprule
        \textbf{Method} & \textbf{Depth ($l$)} & \textbf{$|\tau|$} & $|\mathcal{E}^{-}_l|$ & \textbf{size }(|$\tau$|) & \textbf{size }($|\mathcal{E}^{-}_l|$)   \\
        \midrule
        Exhaustive & 9 & 382K & 414M & 0.3Gb &24.76Gb  \\
        \addlinespace[0.5em] 
        Active Extension & 13  & 10.3K & 0.292M  & 0.046Gb & 0.147Gb  \\
        \bottomrule
    \end{tabular}
    \vspace{0.1cm}
    \caption{Memory Requirements for exhaustive vs. active search. $|\tau|$ is the total number of trajectories. $|\mathcal{E}^{-}_l|$ is the number of negative examples. \textbf{size }(|$\tau$|) is the memory required to store the trajectories. \textbf{size }($|\mathcal{E}^{-}_l|$) is the memory required to store the negative examples. }
    \label{tab:resource_comparison}
\end{table}

The active extension algorithm also yields significant runtime improvements. As shown in Table~\ref{tab:performance_comparison}, the exhaustive baseline is dominated by SAT solving, with a mean SAT time exceeding $7100$ seconds, despite terminating at depth $9$. In contrast, the active method substantially reduces the SAT burden to $2417.12$ seconds on average while scaling to a deeper horizon ($l=13$). Overall, the active extension achieves a mean total runtime of $3544.76$ seconds, nearly a $2\times$ speedup over the exhaustive baseline, while still recovering the full ground-truth solution set. These results  demonstrate that active extension alleviates both memory and computational bottlenecks, which lays the groundwork for scalable reward machine inference from raw state trajectories. 

\begin{table}[ht]
    \centering
    \begin{tabular}{@{}lccccc@{}}
        \toprule
        \textbf{Method} & \textbf{\shortstack{Mean Discovery \\ Time (s)}} & \textbf{\shortstack{Mean SAT \\ Time (s)}} & \textbf{\shortstack{Total \\ Time (s)}} & \textbf{\shortstack{Max \\ Depth}} & \textbf{Sols} \\
        \midrule
        Active Extension & $1127.64 \pm 186.11$ & $2417.12 \pm 6.16$& $3544.76\pm 188.71$ & 13 & 36 \\
        Exhaustive & 26.80 & $7158.73 \pm 1640.39 $ & $7185.53\pm 1640.39$ & 9 & 36 \\
        \bottomrule
    \end{tabular}
    \vspace{2pt}
    \vspace{0.1cm}
    \caption{Computation requirements for active extension vs. exhaustive Baseline. $\textbf{Mean Discovery Time (s)}$ refers to the time spent finding the negative example set. $\textbf{Mean SAT Time (s)}$ refers to the time spent solving a SAT problem instance. Results are over 10 separate trials. The mean SAT time for active extension includes $2350$ s spent solving SAT with the depth-$6$ policy.}
    \label{tab:performance_comparison}
\end{table}
\vspace{-1.5cm}
\section{Discussion and conclusion}

In this paper, we studied reward machine inference in an information-scarce setting where only raw state trajectories and a depth-limited history policy are available, without observing rewards, labels, or automaton nodes. We introduced a SAT-based formulation that jointly infers the reward machine transition structure and a labeling function, and established a sufficient depth condition under which additional history does not further constrain feasibility. Building on this, we proposed an active extension strategy that selectively queries informative trajectory pairs to reduce the hypothesis space efficiently, yielding substantial memory and runtime gains while still recovering the ground-truth solution set up to renaming whenever this is possible with the exhaustive version.

We emphasize that the present framework should be viewed as a foundational step, establishing the identifiability and algorithmic backbone of the problem, rather than a complete end-to-end practical solution. Because the current formulation assumes a discrete state-action space and access to the history policy induced by the optimal product policy, important questions remain regarding robustness to finite data and scalability to richer robotic domains. Nevertheless, the framework suggests several concrete directions for improving applicability: the history policy could be estimated from state-action trajectories using statistical extensions robust to estimation error as is done in known labeling function case  \cite{shehab2025learning}, while selective querying could substantially reduce the computational burden where exhaustive expansion is infeasible. Furthermore, extending the framework beyond the tabular setting will likely involve learning labeling functions directly from perceptual representations, potentially leveraging pre-trained models \cite{he2016deep}. A remaining limitation, however, is that the current termination criterion requires finding solutions up to renaming, which may be unnecessarily restrictive in practice. In general, it might be possible to terminate, even earlier that the sufficient depth, if all the candidate labeled reward machine models are policy-equivalent. Integrating such an equivalence-testing procedure into the active extension process while preserving the framework’s computational advantages represents a promising path for future research.

More broadly, our work fits within a larger question of why memory is needed in sequential decision-making. Arguably, there are two primary sources of such memory requirements. The first is partial observation or epistemic uncertainty, where memory is needed to construct a sufficient information state for decision-making. The second is task structure, where successful behavior depends on remembering progress through a temporally extended objective, as in our setting. Our contribution is aimed at uncovering this second form of memory structure, represented as a reward machine and labeling function, directly from policy data. A natural next step, therefore, is to relate the present framework to the literature on information states \cite{sakcak2024mathematical} and learning Partially Observed Markov Decision Processes \cite{shaw2026toward}, with the broader goal of developing a unified understanding of memory requirements in robot decision-making.

\begin{credits}
\subsubsection{\ackname}This work is supported in part by ONR CLEVR-AI MURI (\#N00014- 21-1-2431). Antoine thanks his son Math{\'e}o for giving him the time to finish this article. LLMs (ChatGPT, Gemini) have been used to polish parts of the writing of this paper. The output of the LLM is then thoroughly examined by the authors to maintain consistency and accuracy.
\end{credits}
%
%
%
\vspace{-0.4cm}
\bibliography{references}
\end{document}